\documentclass[letterpaper, 10 pt, conference]{ieeeconf}  %
\usepackage[T1]{fontenc}
\usepackage[latin9]{inputenc}
\usepackage{mathtools}

\usepackage{amsmath}
\usepackage{amsthm}
\usepackage{amssymb}
\usepackage{graphicx}
\usepackage{hyperref}
\usepackage{array}
\usepackage{balance}
\usepackage{marvosym}
\usepackage{comment}
\usepackage{algorithm}
\usepackage[noend]{algpseudocode}

\algrenewcommand\algorithmicindent{1.0em}

\setlength{\textfloatsep}{5pt}
\setlength{\floatsep}{5pt}
\setlength{\intextsep}{5pt}

\makeatletter
%%%%%%%%%%%%%%%%%%%%%%%%%%%%%% Textclass specific LaTeX commands.
%\theoremstyle{ams}
%\newtheorem{thm}{\protect\theoremname}

\newtheorem{prop}{Proposition}

\newtheorem*{lem*}{Lemma}
\newtheorem{rem}{Remark}
%\theoremstyle{plain}
%\newtheorem{prop}[thm]{\protect\propositionname}
%\theoremstyle{remark}
%\newtheorem{rem}[thm]{\protect\remarkname}
%\theoremstyle{plain}
%\newtheorem{lem}[thm]{\protect\lemmaname}

%\makeatother

%\usepackage{babel}
%\providecommand{\lemmaname}{Lemma}
%\providecommand{\propositionname}{Proposition}
%\providecommand{\remarkname}{Remark}
%\providecommand{\theoremname}{Theorem}
\IEEEoverridecommandlockouts
\begin{document}
\title{Selection of Input Primitives for the Generalized Label Correcting Method}
\author{Brian Paden$^{\rm a}$ and Emilio Frazzoli.$^{\rm a}$\\
	\thanks{$\,^{\rm a}$The authors are with the Laboratory for Information and Decision Systems at MIT. Email: bapaden@mit.edu, frazzoli@mit.edu}}
\maketitle
\begin{abstract}
The generalized label correcting method is an efficient search-based approach to trajectory optimization.
It relies on a finite set of control primitives that are concatenated into candidate control signals.
This paper investigates the principled selection of this set of control primitives.
Emphasis is placed on a particularly challenging input space geometry, the $n$-dimensional sphere. 
We propose using controls which minimize a generalized energy function and discuss the optimization technique used to obtain these control primitives.
A numerical experiment is presented showing a factor of two improvement in running time  when using the optimized control primitives over a random sampling strategy.
\end{abstract}

%\begin{IEEEkeywords}
% Optimal Control, Trajectory Optimization,	Motion Planning.
%\end{IEEEkeywords}

\section{Introduction}

Kinodynamic motion planning and trajectory optimization problems consist of finding an open loop control signal from an infinite dimensional signal space which minimizes a cost functional. 
This challenging problem is approached by borrowing techniques from numerical analysis to approximate the input signal space by a subset over which an optimal solution can be efficiently computed.
A traditional approach is to approximate the signal space by a finite dimensional vector space. 
This enables the use of well developed finite-dimensional nonlinear optimization methods to compute locally optimal solutions to the approximated problem~\cite{betts1998survey}.
This is not always an acceptable strategy since many kinodynamic motion planning problems have unsatisfactory local minima.
The alternative is search-based methods which approximate the input signal space by strings of control signal primitives (or dynamically feasible trajectories) which can be efficiently searched using graph search techniques.
Some examples of search based methods include Lattice-planners~\cite{lindemann2003incremental}, the kinodynamic variant of the Rapidly Exploring Random Tree ($\rm RRT^*$)~\cite{karaman2010optimal}, the Stable Sparse RRT ($\rm SST$) method~\cite{li2016asymptotically}, and the generalized label correcting (GLC) method~\cite{paden2016generalized}.
There are additionally some hybrid approaches utilizing ideas from both finite-dimensional optimization and search-based methods~\cite{stoneman2014embedding,xie2015toward,choudhury2016regionally}.
The $\rm GLC$ method, which is the focus of this paper, is a general search-based algorithm for efficiently generating feasible trajectories solving a trajectory optimization or kinodynamic motion planning problem.
The advantage to this method is the ability to compute feasible trajectories and control signals whose cost approximates the globally optimal cost to the problem in finite time.
In contrast to related algorithms the $\rm GLC$ method relies on weaker technical assumptions and fewer subroutines such as a local point-to-point planning solution. 
%
%For example, no controllability assumptions are required of the system.
%The performance of search-based methods can be substantially influenced by the selection of the control primitives composing the strings of control signals.
%

In this paper we continue the development of the $\rm GLC$ method by investigating a selection strategy for the control primitives used by the algorithm.  
Without prior knowledge of the problem there is no reason to bias the selection of control primitives around any point in the input space.
This suggests evenly dispersing the control primitives on the set of allowable control inputs. 
One technique for obtaining approximately evenly dispersed control primitives is random sampling (cf. Figure \ref{fig:rando}).
This is essentially the approach taken with some randomized methods such as $\rm SST$.
In contrast, a more principled deterministic construction of control primitives can yield a more evenly distributed collection of points on the input space.

%Figure of random and uniform spheres
\begin{figure}
	\begin{centering}
		\includegraphics[width=0.5\columnwidth]{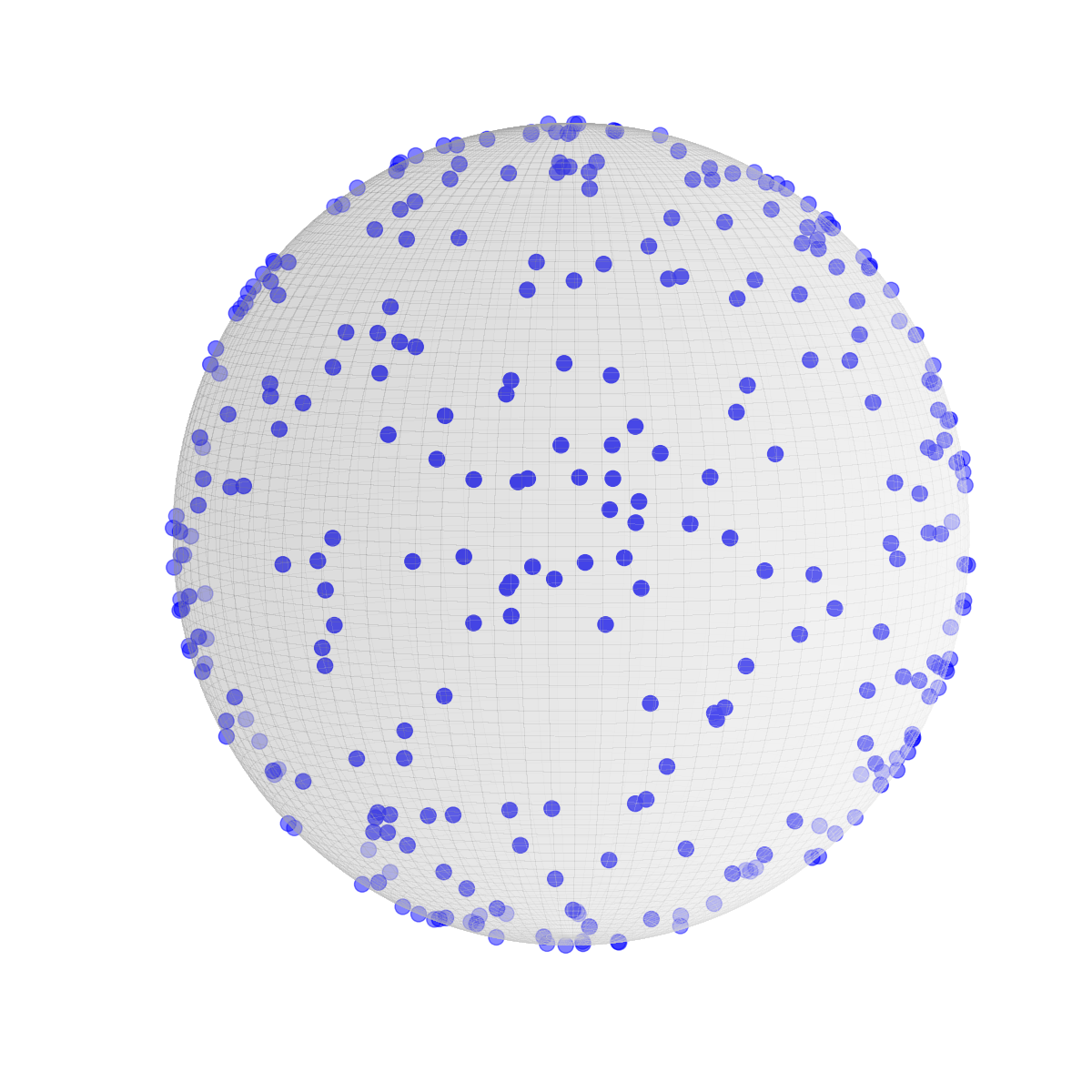}\includegraphics[width=0.5\columnwidth]{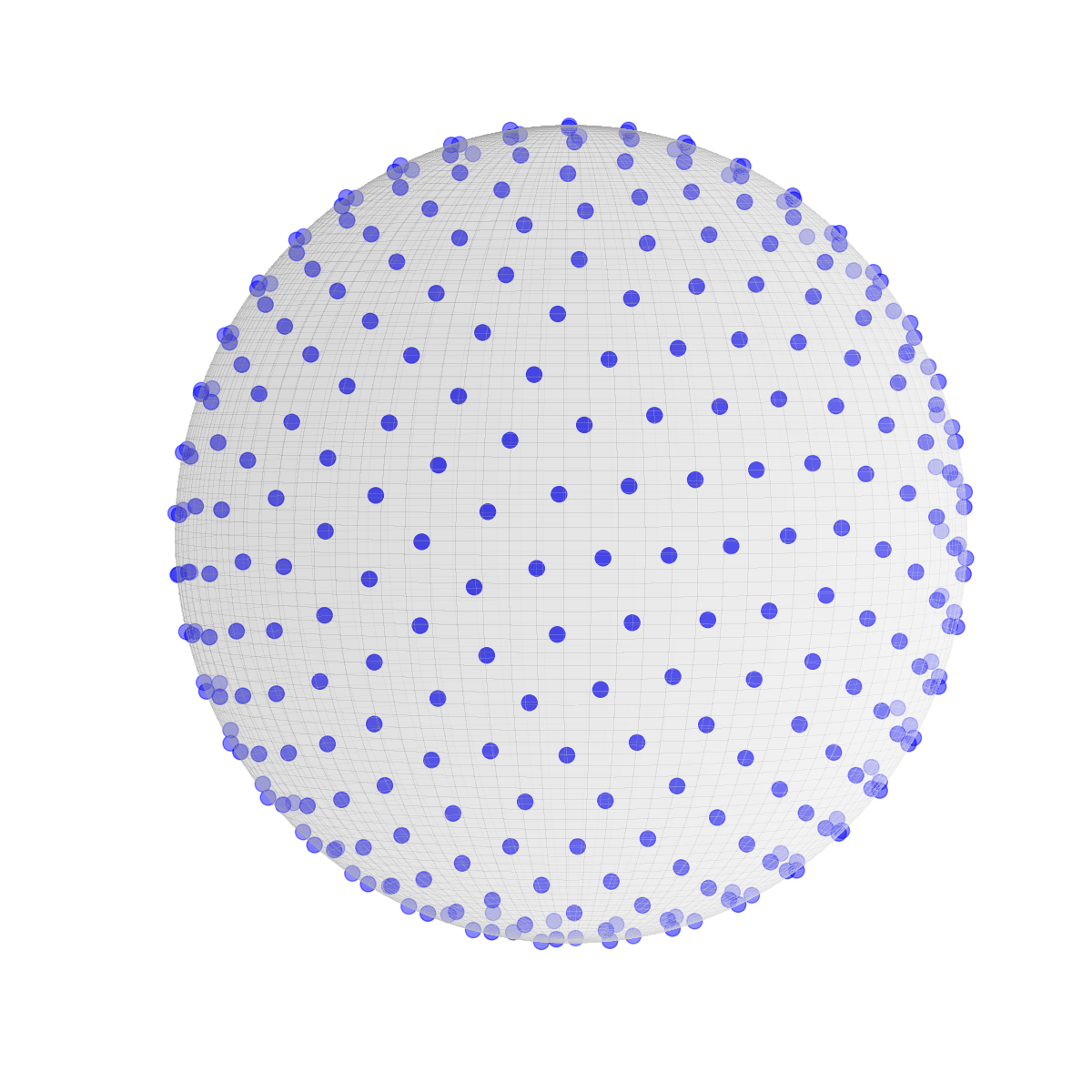}
		\par\end{centering}
	\caption{\label{fig:rando} Side-by-side comparison of the distribution of 500 points generated
		by randomly sampling from the uniform distribution (left) and by obtaining a local solution to the
		Thomson problem (right).}
	
\end{figure}
Sukharev grids~\cite{sukharev1971optimal} are an optimal (they minimize $L_{\infty}$ dispersion) arrangement of points on hypercubes and are easily computed.  
In contrast, control input spaces described by the $n$-dimensional sphere are frequently encountered in control input constrained systems and are a challenging geometry for distributing points evenly. 
The principal contribution of this paper is a procedure for computing optimal configurations of control primitives on the $n$-sphere along with an open source implementation \cite{t_solver}. 
This is accomplished by defining an interaction potential between the points representing control primitives and computing a local minima of the total energy.
This problem arises in many scientific fields and is classically known as Thomson's problem~\cite{saff1997distributing}.
Before describing the control primitive selection strategy in detail, a review of the kinodynamic motion planning problem is presented in Section \ref{sec:kino} together with a high level overview of the $\rm GLC$ method. 
This is followed by an empirical comparison of the proposed selection strategy with a randomized selection strategy in Section \ref{sec:demo}. 
The optimization, known as Thomson's problem, used to select control primitives is presented in Section \ref{sec:thomson_problem}. 
Lastly, Section \ref{sec:gp_method} discusses the nonlinear programming technique used to find locally optimal solutions to Thomson's problem.
\section{Kinodynamic Motion Planning\label{sec:kino}}

Kinodynamic motion planning is a form of open loop trajectory optimization with differential and point-wise constraints.
Differential constraints can be expressed by a classical nonlinear control system
\begin{equation}
\dot{x}(t)=f(x(t),u(t)),\label{eq:ode}
\end{equation}
with $x(t)\in \mathbb{R}^n$, $u(t)\in \Omega$, and $f:\mathbb{R}^n \times \Omega \rightarrow \mathbb{R}^n$.
The control input space $\Omega$ is a subset of $\mathbb{R}^m$.
A dynamically feasible trajectory over the time interval $[0,T]$ is a continuous time history of states $x:[0,T]\rightarrow \mathbb{R}^n$ for which there exists a measurable, essentially bounded control signal $u:[0,T]\rightarrow \Omega$ satisfying (\ref{eq:ode}) almost everywhere. 

The problem has three point-wise constraints.
The first is an initial state constraint, $x(0)=x_0$.
The second is a constraint enforced along the entire trajectory, $x(t)\in X_{free}$ for all $t\in[0,T]$.
Lastly, there is a terminal constraint, $x(T)\in X_{goal}$.
A trajectory that satisfies the differential and point-wise constraints is said to be a \textit{feasible} trajectory.

Next, a cost functional $J$ is used to measure the relative quality of trajectories, 
\begin{equation}
J(u)=\int_{[0,T]}g(x(t),u(t))\, d\mu(t).\label{eq:cost}
\end{equation}
In the above expression $x$ is the unique solution to (\ref{eq:ode}) with input $u$ and initial condition $x_0$. 
Since the minimum of this functional is not attained in general, the goal of computational methods for optimal kinodynamic motion planning is to return a sequence of feasible trajectories and control signals which converge to the optimal value of $J$ subject to the feasibility constraints.

\subsection{The Generalized Label Correcting Method}

The $\rm GLC$ method uses a single resolution parameter $R$ to balance an approximation of the control signal space and state space.
An approximate shortest path algorithm is applied to the approximation so that the output of the algorithm converges, in cost, to the optimal cost of the problem with increasing resolution. 
The control signal space is approximated by a tree of piecewise constant control signals taking values from a finite subset $\Omega_R$ of the input space $\Omega$.
The duration of constant control primitives is $1/R$ and $\Omega_R$ is assumed to converge to a dense subset of $\Omega$ as $R\rightarrow\infty$. 
The total duration of a control signal is limited to $h(R)/R$ where $h:\mathbb{N}\rightarrow \mathbb{R}$ defines a horizon limit that must grow unbounded as $R\rightarrow \infty$. 
This simple construction ensures an optimal control signal can be approximated arbitrarily well with sufficiently high resolution~\cite[Lemma 3]{paden2016generalized}.

The number of alternatives in 
the above construction grows exponentially with $h(R)$. 
Thus, the $\rm GLC$ method performs an approximate search of for the optimal control signal within this finite set. 
This is accomplished by defining a hyper-rectangular grid on the state space and considering controls producing trajectories terminating within the same hyper-rectangle as equivalent. 
This is denoted $u_1 \overset{R}{\sim} u_2  $.
The "label" for a hyper-rectangular region is the lowest cost signal producing a trajectory terminating in that region discovered by the search at any given iteration.
Like the approximation of the signal space, the grid is controlled by the resolution.
Each cell in the grid must be contained in a ball of radius $\eta(R)$ where $\eta(R)$ satisfies 
\begin{equation}
\lim_{R\rightarrow\infty}\frac{R}{L_f\eta(R)}\left(e^{\frac{L_{f}h(R)}{R}}-1\right)=0,\label{eq:partition_scaling}
\end{equation}
for a global Lipschitz constant $L_f$ on the system dynamics with respect to $x$.
Then if $u_1 \overset{R}{\sim} u_2$ and 
\begin{equation}\label{eq:prune}
J(u_{1})+\frac{\sqrt{n}}{\eta(R)}\frac{L_{g}}{L_{f}}\left(e^{\frac{L_{f}h(R)}{R}}-1\right) \leq J(u_{2}),
\end{equation}
the signal $u_2$ and subsequent concatenations of signals beginning with $u_2$ can be omitted or pruned from the search.
$L_g$ denotes a global Lipschitz constant for the running cost $g$ in (\ref{eq:cost}).
Among the remaining signals is a signal with approximately the optimal cost. 
With increasing resolution this signal converges to the optimal cost~\cite[Theorem 1]{paden2016generalized}.

As a simple illustration of the pruning operation consider a 2D single integrator with $\Omega=\{u\in \mathbb{R}^2:\,\Vert u \Vert_2 =1\}$ and a minimum time objective. Figure \ref{fig:demo} shows how the pruned subset explores the space effectively from the initial condition in the lower left corner. In comparison, an exhaustive search over all strings of control primitives evaluates many paths that zig-zag around the initial condition.  

\begin{figure}
	\vspace{0.075in}
	\begin{centering}
		\includegraphics[width=0.8\columnwidth]{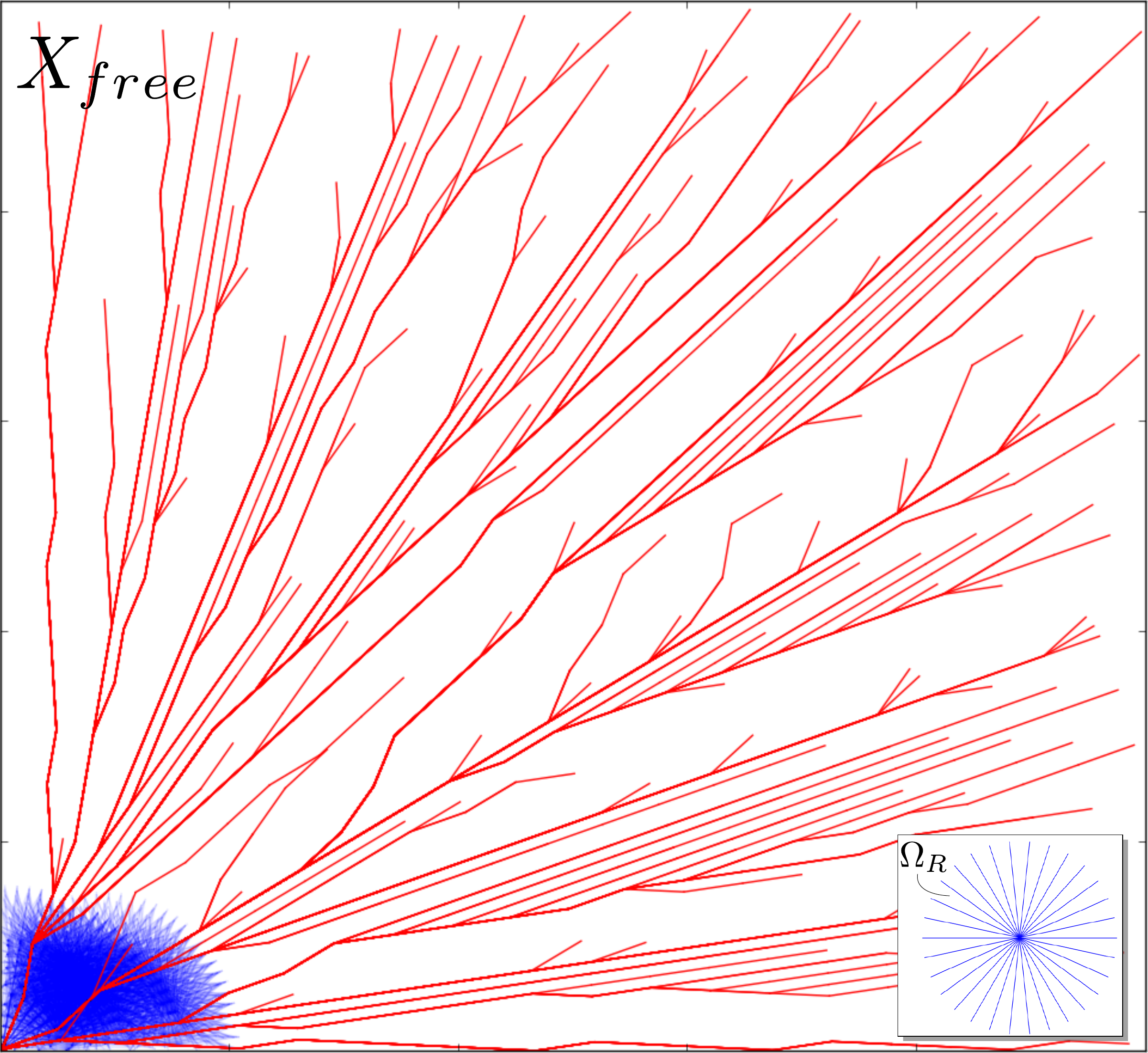}
		\par
		\caption{\label{fig:demo} 2D Single integrator example. The first 3000 iterations of a breadth first exhaustive search over all strings of control primitives is shown in blue. The subset of 416 strings satisfying the pruning condition and remaining in the illustrated rectangle are shown in red.}
	\end{centering}
\end{figure}

Algorithm \ref{ALG} describes the general search procedure which is a standard uniform cost search together with the pruning operation. 
A set $\mathcal{U}_{feas.}$ denotes signals producing trajectories remaining in $X_{free}$. Similarly, $\mathcal{U}_{goal}$ denotes signals producing feasible trajectories terminating in the goal.
The empty string, denoted $Id_{\mathcal{U}}$, has no cost and the $\rm NULL$ control has infinite cost.

The method $expand(u)$ returns the set of all signals consisting of $u$ concatenated with one more input primitive from $\Omega_R$.
A queue $Q$ contains candidate signals for future expansion.
The method $pop(Q)$ returns a signal $\hat{u}$ in $Q$ satisfying 
\begin{equation}
	\hat{u}\in \underset{w\in Q}{\rm argmin}\{J(w)\}.
\end{equation}
The method $find(w,\Sigma)$ returns a signal $z$ belonging to the same hyper-rectangle as $w$ from the set of labels $\Sigma$; if no such signal is present in $\Sigma$ the method $find(w,\Sigma)$ returns $\rm NULL$. The method $depth(w)$ returns the number of piecewise constant segments making up $w$. If $z$ prunes $w$ in the sense of equation (\ref{eq:prune}) we write $z\prec_R w$. 

\begin{algorithm} 
	\begin{algorithmic}[1]
		\State $Q\leftarrow \{Id_\mathcal{U}\},\,\Sigma \gets \emptyset,\,S \gets \emptyset$  
		\While {$Q\neq \emptyset$}        
		\State $u \gets pop(Q)$   
		\State $S \gets expand(u)$   
		\For{$w \in S$} 	
		\If{$w \in \mathcal{U}_\mathrm{goal} $}         	  
		\State \Return $(J(w),w)$ 	
		\EndIf       
		\State $z = find(w,\Sigma)$ 	  
		\If{$(w \notin \mathcal{U}_{feas.} \vee (z \prec_R w) \vee depth(w) \geq h(R))$}  	    
		\State $S \gets S\setminus \{w\}$  	  
		\ElsIf{$J(w)<J(z)$} 	  
		\State $\Sigma \gets (\Sigma \setminus \{z\}) \cup \{w\}$ 	  
		\EndIf 
		\EndFor       
		\State $Q \gets Q \cup S$ 
		\EndWhile  
		\State 
		\Return $(\infty,\ensuremath{\mathrm{NULL}})$ 
	\end{algorithmic} \caption{\label{ALG} Generalized Label Correcting ({\rm GLC}) Method} 
\end{algorithm}
%

%
%The numerical results of the next section demonstrate that the selection strategy for the control primitives c algorithms performance is affected by the selection strategy for the control primitives.
%
 
\section{Motivating Example}\label{sec:demo}

\begin{figure*}
	\vspace{0.075in}
	\begin{centering}
		\includegraphics[width=1.0\textwidth]{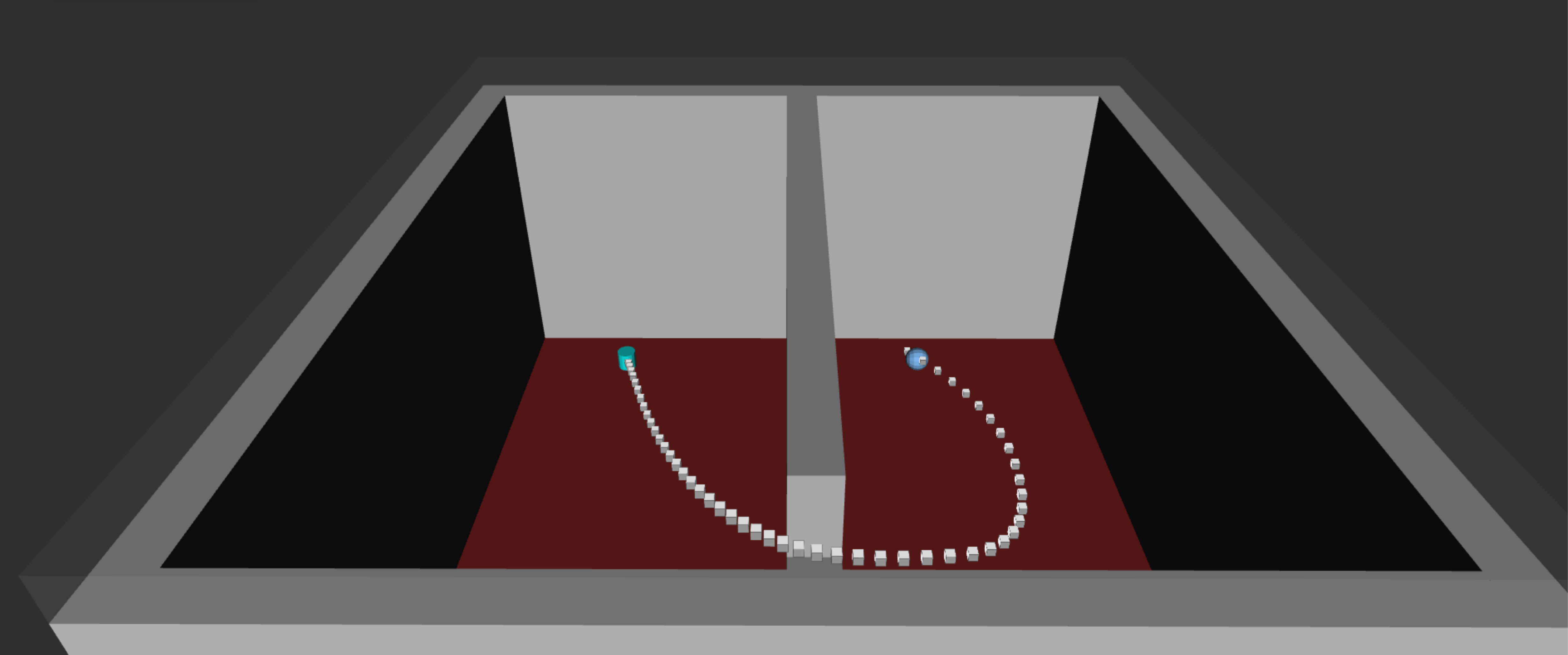}
		\par\end{centering}
	
	\caption{Visualization of the trajectory generated by the GLC method using the optimized input space approximation. The point robot accelerates from the initial stationary position (left) directly to the window followed by a wide cornering maneuver and continued acceleration towards the goal (right). Note that the goal set leaves the terminal velocity unspecified. Thus, the optimal trajectory intercepts the goal at high speed which causes the asymmetry in the solution depicted. \label{fig:quad}}
	
\end{figure*}

Consider an agile aerial robot navigating an indoor environment. 
The environment, depicted in Figure \ref{fig:quad}, consists of two $5m\times 5m\times 10m$ rooms connected by a $1m\times 1m$ window in an upper corner of each room. 
The task is to plan a dynamically feasible and collision free trajectory between a starting state and a goal set in minimum time.
The robot is modeled with six states; three each for position and velocity. 
The mobility of the robot is described by the following equations
\begin{equation}\arraycolsep=1.4pt\def\arraystretch{1.5}
\begin{array}{rcl}
\dot{x} & = & v,\\
\dot{v} & = & -0.1v\Vert v\Vert_{2}+5u.
\end{array}\label{eq:dynamics}
\end{equation}
The states are $x,v\in\mathbb{R}^{3}$ and the control is $u\in\mathbb{R}^{3}$.
In this representation the zero control is defined about a hover state negating the effect of gravity.
The term $-0.1v\Vert v\Vert_{2}$ reflects a quadratic aerodynamic drag, and the control $u$ is a thrust vector
which can be directed in any direction. 
The control is limited to a maximum thrust which is modeled by the constraint $\Vert u\Vert_{2}\leq1$ so that the robot's acceleration is limited to $5\,m/s^2$ and speed is limited to $\sqrt{50}\,m/s$.
It follows from Pontryagin's minimum principle \cite{athans2013optimal}
that the minimum time objective will yield saturated control inputs at all times,  $\Vert u(t)\Vert_{2}=1$ (i.e. the control is restricted to the sphere).
An implementation of the $ \rm GLC$ method in C++ was run on an Intel i7 processor at 2.6GHz. 
Parameters of the algorithm are provided in Table \ref{tab:params}.
In the first set of trials randomly generated control primitives on the $2$-sphere are obtained by sampling from the uniform distribution. 
In the second set of trials minimum energy arrangements of points were used.
Figure \ref{fig:rando} illustrates configurations of 500 points generated by the two strategies.
The average running time and solution cost of trajectories returned by the $\rm GLC$ method are summarized in Figure \ref{fig:run_time}.
We observe that the optimized input approximation strategy improves the running time required to obtain a trajectory of a given cost by roughly a factor of two for this problem.
\begin{table}
	\vspace{0.075in}
	\begin{center}
		\begin{tabular}{ | m{5.0cm} | m{2cm}|} 
			\hline
			Resolution range ($R$) & $R=8,9,...,13$ \\
			\hline
			Horizon limit ($h(R)$) & $10R\log(R)$ \\
			\hline
			State space partition scaling ($\eta(R)$)& $65R^{3/2}$ \\ 
			\hline
			Control primitive duration ($c/R$) & $10/R$\\ 
			\hline
			Number of controls from $n$-sphere ($\Omega_R$) & $\left\lfloor 3R^{3/2} \right\rfloor$ \\
			\hline 
		\end{tabular}
		\vspace{0em}
		\caption{Tuning parameter selection for the $\rm GLC $ method. \label{tab:params}}
	\end{center}
\end{table}
\begin{figure}
	\begin{centering}
		\includegraphics[width=1.05\columnwidth,trim={0.70cm 0.0cm 0.7cm 0.7cm},clip]{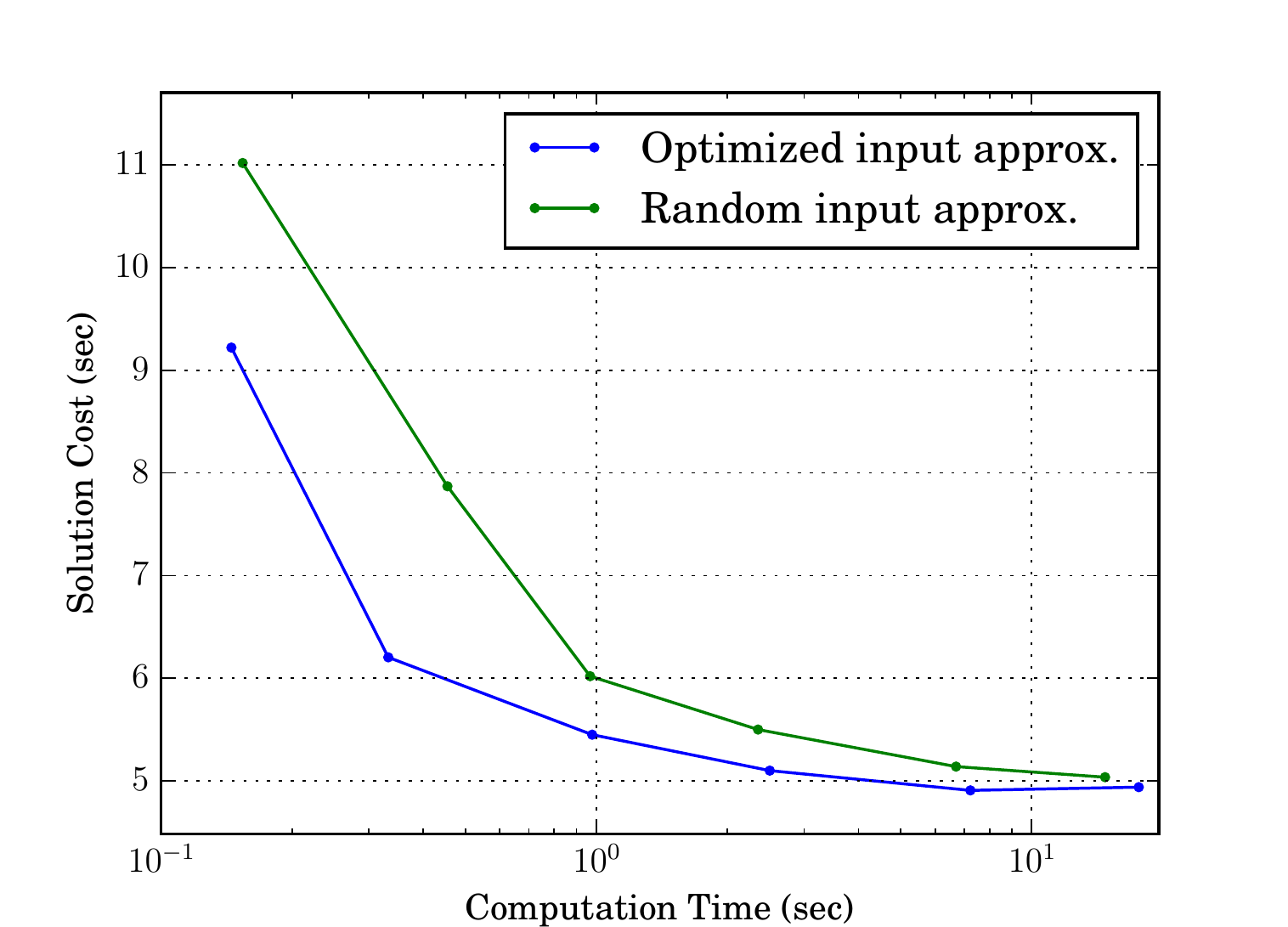}
		\par
		\caption{Average cost and running times (10 trials each) of solutions returned by the $\rm GLC$ method for the two input approximation schemes. The optimized selection (blue) improves the running time required to obtain a solution of a given cost over the random selection (green) by roughly a factor of two. \label{fig:run_time}}
	\end{centering}
\end{figure}

\section{Thomson's Problem\label{sec:thomson_problem}}

We have proposed using minimum energy configurations of points on the $n$-sphere as the control primitives for the $\rm GLC$ method. This is classically referred to as Thomson's problem.
This section provides a more detailed description of the problem and existence of optimal solutions.
Thomson's problem is an optimization problem seeking to find a minimum energy configuration
of $N$ points in $\mathbb{R}^{n}$ whose Euclidean norm
is $1$. 
The generalized energy is given by a superposition of pair-wise interactions parameterized by scalar $\alpha$.
The energy between two points $p_{1},p_{2}\in\mathbb{R}^{n}$ is given by $\left\Vert p_{1}-p_{2}\right\Vert _{2}^{\alpha}$
if $\alpha\neq0$, and $\log\left(\left\Vert p_{1}-p_{2}\right\Vert _{2}^{-1}\right)$
if $\alpha=0$. 

The configuration of the $N$ points is represented by a matrix $X\in\mathbb{R}^{n\times N}$ with each column representing coordinates of a point in $\mathbb{R}^{n}$.
The notation $X_{:,j}$ will be used to identify the coordinates of
the $j^{th}$ point in $\mathbb{R}^{n}$. 
In our analysis we need to distinguish between the Frobenius and Euclidean norms and inner products,
\begin{equation}\arraycolsep=1.4pt\def\arraystretch{1.5}
\begin{array}{rcl}
\left\langle X,Y\right\rangle_F & \coloneqq & {\rm Tr}(XY^{T}),\\
\left\Vert X \right\Vert_F  & \coloneqq &  \sqrt{Tr(XX^T)},\\
\left\langle x,y \right\rangle_2  & \coloneqq &  x^Ty,\\
\Vert x \Vert_2  & \coloneqq &  \sqrt{x^Tx}.
\end{array}\label{eq:norm}
\end{equation}
The set of feasible configurations $S$ is defined 
\begin{equation}
S\coloneqq\left\{ X\in\mathbb{R}^{n\times N}:\:\left\Vert X_{:,j}\right\Vert _{2}=1,\qquad j=1,...,N\right\} .
\end{equation}
It follows from this definition that
\begin{equation}
\left\Vert X\right\Vert _{F}=\sqrt{N}\qquad \forall X\in S.
\end{equation} 
Using the above notation, the total generalized energy is given by
\begin{equation}\arraycolsep=1.4pt\def\arraystretch{1.5} \label{eq:totalDist}
\begin{array}{ll}
E_{\alpha}(X) = \overset{N}{\underset{i=1} {\sum}} \underset{j<i}{\sum}\left(\left\Vert X_{:,i}-X_{:,j}\right\Vert _{2}^{\alpha}\right) & {\rm if}\, \alpha \neq 0, \\
E_{\alpha}(X) = \overset{N}{\underset{i=1} {\sum}} \underset{j<i}{\sum} \log \left(\left\Vert X_{:,i}-X_{:,j}\right\Vert _{2}^{-1}\right)  & {\rm if}\, \alpha = 0,

\end{array}
\end{equation}
and the optimization objective is
\begin{equation}\arraycolsep=1.4pt\def\arraystretch{1.5}
\begin{array}{l}
\underset{X\in S}{\min}E_{\alpha}(X)\qquad{\rm if}\:\alpha\leq0,\\
\underset{X\in S}{\max}E_{\alpha}(X)\qquad{\rm if}\:\alpha>0.
\end{array}\label{eq:prblm}
\end{equation}

\subsection{Existence of Optimal Configurations}
For $\alpha >0$, the energy is a sum of differentiable functions. Thus, the energy is also differentiable and continuous. 
The set of configurations $S$ is compact in $(\mathbb{R}^{n\times N}, \Vert \cdot \Vert_F$).
Then by Weierstrass' theorem the maximum value is attained on $S$. 

For $\alpha \leq 0$ the continuity of $E_{\alpha}$ is broken as a result of the negative exponent and becomes unbounded from above;
configurations with overlapping points have infinite energy. 
However, $E_{\alpha}$ remains continuous on any subset in which it is bounded.
Take $X_{0}\in S$ such that $E_\alpha (X_0)<\infty$, the set $\left\{ X\in \mathbb{R}^{n\times N}:\,E_{\alpha}(X)\leq E_{\alpha}(X_{0})\right\} $ is closed and by construction contains a minimizer over $S$ if one exists. 
Since $S$ is compact, $S\cap \{X \in \mathbb{R}^{n \times N} :\:f(X)\leq f(X_{0})\} $
is also a compact set on which $E_{\alpha}$ is continuous. 
Thus, the minimum over this subset is attained and is the minimum over $S$.

\section{The Gradient Projection Method}\label{sec:gp_method}

This section reviews the gradient projection method with the adaptation of the Armijo rule proposed by Bertsekas in \cite{gpConvergence} for closed convex sets. 
We then address applying this method to the Thomson problem.

Consider a general minimization problem where the feasible set $C$ is a nonempty \textit{closed convex} subset of $\mathbb{R}^{m}$.
%, and the objective $f:\mathbb{R}^{m}\rightarrow\mathbb{R}$ and admits a minimum on $C$.
%
The gradient of $f$ at $x$ is denoted $\nabla f(x)$, and the projection of $x\in\mathbb{R}^{m}$ into $C$ is denoted $[x]^{+}$ and satisfies 
\begin{equation}
\left\{ [x]^{+}\right\} =\underset{y\in C}{{\rm argmin}}\left\{ \Vert y-x\Vert_{2}\right\} .\label{eq:proj}
\end{equation}
For a closed and convex $C$, there is a unique $y\in C$ minimizing
$\Vert y-x\Vert_{2}$.
The convergence of the method relies on a non-expansiveness property, 
\begin{equation}
	\left\Vert [x]^+-[y]^+ \right\Vert_2 \leq \left\Vert x^+-y^+ \right\Vert_2, 
\end{equation}
which requires that $C$ be closed and convex.
Using (\ref{eq:proj}), the recursion of the gradient projection method
is of the form
\begin{equation}
x_{k+1}=[x_{k}-\gamma_{k}\nabla f(x)]^{+}.\label{eq:gd}
\end{equation}
The subsequent point $x_{k+1}$ is obtained by moving along the direction of steepest descent scaled by a step-size $\gamma_{k}$, and then projecting the result into $C$. 
In contrast to the gradient descent step for the classical Armijo rule, which is taken along the ray through $x_{k}$ in the direction of steepest descent, the gradient projection step is taken along the projection of that ray onto $C$. 
Three tuning parameters define the step size selection; $s>0$, $\sigma>0$, and $\beta\in(0,1)$. 
The step size is given by $\alpha_{k}=s\beta^{m}$ where $m$ is the smallest natural number
such that 
\begin{equation}\arraycolsep=1.4pt\def\arraystretch{1.5}
\begin{array}{l}
f\left(x_{k}\right)-f\left([x_{k}-s\beta^{m}\nabla f(x_{k})]^{+}\right)\geq\\
\left\langle \sigma\left(\nabla f(x_{k})\right) , \left(x_{k}-[x_{k}-s\beta^{m}\nabla f(x_{k})]^{+}\right) \right\rangle.
\end{array}\label{eq:armijo}
\end{equation}
In contrast to carrying out an exact minimization along the steepest descent direction, the Armijo-rule has much less computational overhead and simply finds a step size with a sufficient decrease in $f$.
Intuitively, $s$ is the initial large step size which is rapidly reduced as $m$ is increased from $0$.

It was shown in \cite{gpConvergence} that limit points $x^*$ of the sequence $\{x_{k}\}$ produced by (\ref{eq:gd}) satisfy the necessary (but not sufficient) condition for local optimality:
\begin{equation}\label{eq:opt_cond}
	\left\langle \nabla f(x^*),(x-x^*) \right\rangle_2\geq 0,\qquad \forall x\in C.
\end{equation} 
It follows from (\ref{eq:armijo}) that $f(x^{k+1})<f(x^k)$ so these limit points are generally local minima. 

\subsection{Application to the Thomson Problem}

To adapt the standard theory, we have to address the issue that the feasible set $S$ for the Thomson problem is not convex. 
We replace feasible set $S$ by its convex hull to ensure that the gradient projection iteration (\ref{eq:gp_step}) converges to a local solution on the convex hull.
We then prove that $S$ is invariant under the gradient projection iteration so an initial configuration of points on $S$ with bounded energy will converge to a local solution on $S$.

Let $conv(S)$ denote the convex hull of $S$. 
While $\left\Vert X\right\Vert _{F}=\sqrt{N}$ for every $X\in S$, we now have $\left\Vert X\right\Vert _{F}\leq\sqrt{N}$ for every $X\in conv(S)$. 

The relaxed problem is 
\begin{equation}\arraycolsep=1.4pt\def\arraystretch{1.5}
\begin{array}{l}
\underset{X\in conv(S)}{\min}E_{\alpha}(X)\qquad{\rm if}\:\alpha\leq0,\\
\underset{X\in conv(S)}{\max}E_{\alpha}(X)\qquad{\rm if}\:\alpha>0.
\end{array}\label{eq:cvx_relaxation}
\end{equation}
This problem admits optimal values for $X$ by the same argument as the original problem. 
The projection onto $B$ is given by 
\begin{equation}\arraycolsep=1.4pt\def\arraystretch{1.5}
\left([X]^{+}\right)_{:,j}\coloneqq\left\{ \begin{array}{cc}
\frac{X_{:,j}}{\left\Vert X_{:,j}\right\Vert _{2}}, & {\rm if}\;\left\Vert X_{:,j}\right\Vert _{2}>1\\
X_{:,j} & {\rm otherwise}
\end{array}\right..
\end{equation}
Note that if $X\in conv(S)$, the projection is the identity map.
However, if $X\notin conv(S)$, the projection takes $X$ into $S$ %
That is 
\begin{equation}\arraycolsep=1.4pt\def\arraystretch{1.5}
\begin{array}{ll}
\left[ X \right]^{+}=X & \quad \forall X \in conv(S),\\
\left[ X \right]^{+} \in S & \quad \forall X \notin conv(S).
\end{array}\label{eq:projection_props}
\end{equation}

The gradient projection iteration is then 
\begin{equation}\arraycolsep=1.4pt\def\arraystretch{1.5}
\begin{array}{c}
X^{k+1}=\left[X^{k}-\gamma_{k}\nabla E(X^{k})\right]^{+}\qquad{\rm if}\:\alpha\leq 0,\\
X^{k+1}=\left[X^{k}+\gamma_{k}\nabla E(X^{k})\right]^{+}\qquad{\rm if}\:\alpha>0.
\end{array}\label{eq:gp_step}
\end{equation}
The existence of optimal solutions to the objective over $conv(S)$
together with the standard theory ensures that limit points of the iteration will satisfy the optimality condition (\ref{eq:opt_cond}).
However, we are not interested in solutions on $conv(S)\setminus S$.
To ensure that the recursion converges to a stationary point on $S$, we make sure the initial configuration is in $S$. 
The justification for this is provided below. 
\begin{prop}
$S$ is an invariant set under the gradient projection iteration.\end{prop}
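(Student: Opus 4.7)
The plan is to show that every column of the pre-projection matrix $Y\coloneqq X^{k}\pm\gamma_{k}\nabla E(X^{k})$ has Euclidean norm at least $1$; since the projection operator acts column-wise by normalising any column of norm greater than $1$ and leaving the rest alone, this forces every column of $X^{k+1}=[Y]^{+}$ to have norm exactly $1$, placing $X^{k+1}$ in $S$.

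First I would compute the partial derivative of $E_{\alpha}$ with respect to a single column $X_{:,j}$. For $\alpha\neq 0$, differentiating $\|X_{:,i}-X_{:,j}\|_{2}^{\alpha}$ with respect to $X_{:,j}$ yields $\alpha\|X_{:,j}-X_{:,i}\|_{2}^{\alpha-2}(X_{:,j}-X_{:,i})$, and for $\alpha=0$ the derivative of $-\log\|X_{:,i}-X_{:,j}\|_{2}$ yields $-\|X_{:,j}-X_{:,i}\|_{2}^{-2}(X_{:,j}-X_{:,i})$. Combining with the sign convention in (\ref{eq:gp_step}) — subtract the gradient when $\alpha\leq 0$, add it when $\alpha>0$ — the update direction for column $j$ takes the common form
\begin{equation*}
Y_{:,j}-X^{k}_{:,j}=\sum_{i\neq j}c_{i}\bigl(X^{k}_{:,j}-X^{k}_{:,i}\bigr),\qquad c_{i}\geq 0,
\end{equation*}
in all three cases. (The coefficients $c_{i}$ are finite because the initial configuration is assumed to have bounded energy, so no two columns coincide, and they remain so along the iteration since $E_{\alpha}$ is nonincreasing.)

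Next I would take the inner product of this displacement with $X^{k}_{:,j}$. Using $\|X^{k}_{:,j}\|_{2}=\|X^{k}_{:,i}\|_{2}=1$ and Cauchy--Schwarz,
\begin{equation*}
\bigl\langle X^{k}_{:,j},\,Y_{:,j}-X^{k}_{:,j}\bigr\rangle_{2}=\sum_{i\neq j}c_{i}\bigl(1-\langle X^{k}_{:,j},X^{k}_{:,i}\rangle_{2}\bigr)\geq 0,
\end{equation*}
i.e.\ the update has a nonnegative outward radial component at each column. Expanding $\|Y_{:,j}\|_{2}^{2}=1+2\langle X^{k}_{:,j},Y_{:,j}-X^{k}_{:,j}\rangle_{2}+\|Y_{:,j}-X^{k}_{:,j}\|_{2}^{2}$ then gives $\|Y_{:,j}\|_{2}^{2}\geq 1$ for every $j$.

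Finally I would invoke the definition of the projection. Columns of $Y$ with norm strictly greater than $1$ are rescaled to unit norm, while columns with norm at most $1$ are fixed; combined with the just-established lower bound, each column of $[Y]^{+}$ therefore has norm exactly $1$, so $X^{k+1}\in S$. I do not expect a real obstacle here; the only subtle point is noticing that the stated dichotomy in (\ref{eq:projection_props}) alone is insufficient (e.g.\ a $Y$ with some columns of norm above and some below $1$ need not project into $S$), so one genuinely must exploit the Thomson gradient's radially outward structure rather than a generic property of the projection.
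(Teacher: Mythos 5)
Your proof is correct and rests on the same key computation as the paper's --- the observation that, for a configuration on the sphere, each pairwise term $X_{:,j}-X_{:,i}$ has strictly positive inner product with $X_{:,j}$ (the paper isolates this as the appendix Lemma; you get it from Cauchy--Schwarz with equality ruled out), so the update direction has a nonnegative radial component at every column. Where you diverge is in how you finish. The paper aggregates the column-wise positivity into the single Frobenius statement $\langle X,-\nabla E_\alpha(X)\rangle_F>0$, concludes $\Vert X-\gamma\nabla E_\alpha(X)\Vert_F>\sqrt{N}$, hence $X-\gamma\nabla E_\alpha(X)\notin conv(S)$, and then invokes the dichotomy in equation (\ref{eq:projection_props}) to land in $S$. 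You instead keep the argument column-by-column: each column of the pre-projection matrix has norm at least $1$, and the column-wise projection maps any such matrix into $S$. Your route is actually the more watertight of the two, for exactly the reason you flag at the end: the second line of (\ref{eq:projection_props}) is not a generic property of the column-wise projection (a matrix outside $conv(S)$ can have some columns of norm below $1$, and its projection then fails to lie in $S$), so the Frobenius-norm argument alone does not justify the paper's final step --- one must use the per-column radial positivity that both proofs establish but only yours carries through to the conclusion. Your version also treats $\alpha<0$, $\alpha=0$, and $\alpha>0$ uniformly via the common form of the update, whereas the paper writes out only $\alpha<0$. The one soft spot is your parenthetical justification that no two columns coincide: bounded energy rules out coincident points only for $\alpha\leq 0$; for $\alpha>0$ distinctness must be assumed of the initial configuration (as the paper implicitly does), since coincident points are not penalized by the energy there.
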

\begin{proof}
Suppose $\alpha<0$ and $X\in S$ (The essentially identical derivations for $\alpha>0$ and $\alpha=0$
are omitted for brevity). 
We will first show that that $X-\gamma\nabla E_{\alpha}(X)\notin conv(S)$ for any $\gamma >0$. 
Then by (\ref{eq:projection_props}) we will have $\left[X-\gamma\nabla E_{\alpha}(X)\right]^{+}\in S$.
Select an index $k$ and consider the motion of the coordinates $X_{:,k}$
in a step of the gradient projection iteration. The partial derivative
with respect to $X_{:,k}$ is 
\begin{equation}
\frac{\partial E(X)}{\partial X_{:,k}}=\sum_{j\neq k}\alpha\left(X_{:,k}-X_{:,j}\right)\left\Vert X_{:,k}-X_{:,j}\right\Vert _{2}^{\alpha-1}.
\end{equation}
Since $X_{:,j}\neq X_{:,k}$ and $\left\Vert X_{:,k}\right\Vert_2 =\left\Vert X_{:,k}\right\Vert_2 $,
we have the inequality 
\begin{equation}
\left\langle X_{:,k},\left(X_{:,k}-X_{:,j}\right)\right\rangle_2 >0,\label{eq:points_out}
\end{equation}
which is derived in the appendix. 
Since $\alpha<0$ and equation (\ref{eq:points_out}) is true for all $j\neq k$ we obtain 
\begin{equation}
\left\langle X,-\frac{\partial E(X)}{\partial X}\right\rangle_F >0.\label{eq:outward_descent}
\end{equation}
The interpretation (\ref{eq:outward_descent}) is that the steepest descent direction is directed out of $conv(S)$. 
For any step size $\gamma > 0$ we have
\begin{equation}\arraycolsep=1.4pt\def\arraystretch{1.5}
\begin{array}{l}
\left\Vert X-\gamma\nabla E_{\alpha}(X)\right\Vert _{F}\\
=\sqrt{\left\Vert X\right\Vert _{F}^{2}+2\left\langle X,-\gamma\nabla E_{\alpha}(X)\right\rangle_F +\left\Vert -\gamma\nabla E_{\alpha}(X)\right\Vert _{F}^{2}}\\
>\sqrt{\left\Vert X\right\Vert _{F}^{2}+2\left\langle X,-\gamma\nabla E_{\alpha}(X)\right\rangle_F }\\
>\left\Vert X\right\Vert _{F}
\end{array}\label{eq:norm_increases}
\end{equation}
By assumption $X\in S$ so $\left\Vert X\right\Vert _{F}=\sqrt{N}$
and $\left\Vert X-\gamma\nabla E_{\alpha}(X)\right\Vert _{F}>\sqrt{N}$
by equation (\ref{eq:norm_increases}). 
Thus, $\left\Vert X-\gamma\nabla E_{\alpha}(X)\right\Vert _{F}\notin conv(S)$ so in reference to (\ref{eq:projection_props}) the projection will take $ X-\gamma\nabla E_{\alpha}(X)$ into $S$ which is the stated result.
\end{proof}
In contrast, if $X\notin S$ it is not necessarily true that successive iterations of the gradient projection map will converge to $S$. 
It is not difficult to construct fixed points of the map on $conv(S)\setminus S$.
\subsection{Open Source Implementation}
A lightweight open source implementation in C++ has been made available \cite{t_solver}. 
The code has no external dependencies so that it can be put into use quickly and is easily integrated into larger projects. 

The initial configuration is sampled randomly from the uniform distribution on $S$. 
The method terminates at iteration $k$ if $\left|E(X^{k+1})-E(X^{k})\right|<\varepsilon_{tol}$ or $k=k_{max}$. 
A configuration file allows the user to specify $\varepsilon_{k}$ and $k_{max}$ as well as the number of points $N$, the dimension of the space $n$ and the power law in the generalized energy $\alpha$.
Additionally, the user can specify the Armijo step parameters $\sigma$, $\beta$, and $s$.

Figure \ref{fig:Visualizations} illustrates several point configurations generated by the released code in $\mathbb{R}^{3}$ for $\alpha=-1$ and various $N$.
%
%Polytopes figure
\begin{figure}[htb]
	\begin{centering}
		\includegraphics[width=.8\columnwidth]{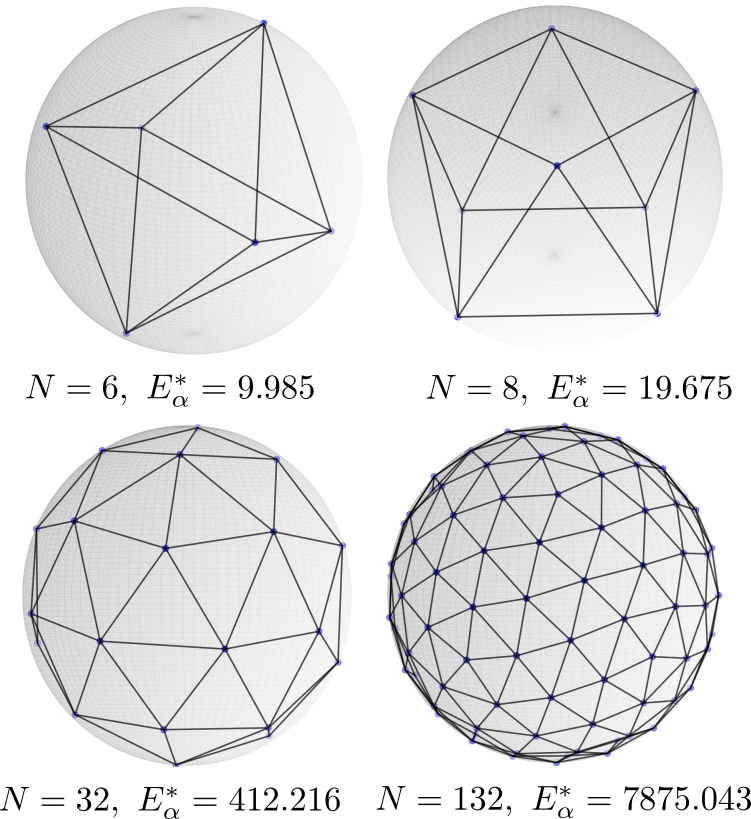}
		\par\end{centering}
	
	\centering{}\caption{\label{fig:Visualizations}Visualizations of various locally optimal configurations for the $2$-sphere with $\alpha=-1$. The objective values $E_{\alpha}^{*}$
		above coincide with the best known values found in \cite{erber1991equilibrium} and \cite{altschuler1997possible}.}
\end{figure}
%

%
\begin{comment}
\begin{rem}
The globally optimal value for Thomson's problem is known only for a handful of values for $N$. 
%
Since the objective is neither convex nor concave it is difficult to verify that a local solution is also a global solution.
%
The configuration shown for $N=132$ in Figure \ref{fig:Visualizations} was obtained by taking the best solution returned by our implementation after 20 trials with random initial configurations. 
%
However, every local solution found a had an energy within $0.1\%$ of the best known value \cite{altschuler1997possible}.
%
This suggests local solutions will be satisfactory approximations of the control input space for a system.
%
\end{rem}
\end{comment}
\section{Conclusion}\label{sec:conclusion}
This paper demonstrated that an optimized selection of control primitives with respect to a general energy function improved performance of the $\rm GLC$ method by a factor of two in comparison to randomly sampled controls. 
Optimization of the control primitives was addressed with the gradient projection method.
While the resulting optimization does not meet the standard assumptions of the gradient projection method, a rigorous analysis showed that it remains applicable to this problem with an appropriately selected initial configuration of control inputs. 
An open source implementation of the gradient projection method applied to Thomson's problem has been made available to generate control primitives on the $n$-sphere.
\bibliographystyle{ieeetr}
\bibliography{IEEEabrv,references}

\appendix{}

The strict positivity of $\left\langle X_{:,j^{*}},\left(X_{:,j^{*}}-X_{:,k}\right)\right\rangle_2 $ in (\ref{eq:points_out}) is a consequence of the following Lemma which is true for any inner product space. 

Recall that $X_{:,j^{*}}\neq X_{:,k}$ and $\left\Vert X_{:,k}\right\Vert _{2} = \left\Vert X_{:,j^{*}}\right\Vert _{2}$.
\begin{lem*}
If $y\neq x$ and $\Vert y\Vert \leq \Vert x \Vert$, then $\left\langle x,x-y\right\rangle >0$. 
\end{lem*}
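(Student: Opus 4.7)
The plan is to use the polarization-type identity
\[
2\langle x, x-y\rangle \;=\; \|x\|^2 - \|y\|^2 + \|x-y\|^2,
\]
which holds in any inner product space and can be verified in one line by expanding $\|x-y\|^2 = \|x\|^2 - 2\langle x,y\rangle + \|y\|^2$ and rearranging. Once this identity is in hand, the conclusion is immediate: the first term $\|x\|^2 - \|y\|^2$ is nonnegative by the hypothesis $\|y\|\leq \|x\|$, and the second term $\|x-y\|^2$ is strictly positive by the hypothesis $y\neq x$. Their sum is therefore strictly positive, which gives $\langle x,x-y\rangle>0$ after dividing by $2$.

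An equivalent route, slightly longer but perhaps more transparent, would be to write $\langle x,x-y\rangle = \|x\|^2 - \langle x,y\rangle$ and bound $\langle x,y\rangle$ via Cauchy--Schwarz: $\langle x,y\rangle \leq \|x\|\|y\| \leq \|x\|^2$. Strictness would then be recovered by splitting into the cases $\|y\|<\|x\|$ (strictness in the second bound) and $\|y\|=\|x\|$ (strictness in Cauchy--Schwarz, since equality forces $y=\lambda x$ with $\lambda\geq 0$, and then $\|y\|=\|x\|$ with $x\neq 0$ forces $\lambda=1$, hence $y=x$, contradicting the assumption). One must also handle $x=0$ separately; but $x=0$ together with $\|y\|\leq\|x\|$ forces $y=0=x$, contradicting $y\neq x$, so $x\neq 0$ throughout.

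The first route avoids case analysis entirely and does not need Cauchy--Schwarz, so I would present that one. There is no real obstacle; the only thing to be careful about is recording why each of the two terms in $\|x\|^2-\|y\|^2+\|x-y\|^2$ has the sign claimed, and in particular pointing out that $\|x-y\|^2>0$ uses the strict hypothesis $y\neq x$ (so that the overall inequality is strict even when $\|x\|=\|y\|$). This is exactly the edge case that matters for the application in the proof of the proposition, where the two columns $X_{:,k}$ and $X_{:,j^*}$ lie on the unit sphere and hence have equal norm.
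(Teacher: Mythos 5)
Your primary argument is correct and is essentially the paper's own proof in rearranged form: both expand $\Vert x-y\Vert^{2}$, use $\Vert y\Vert\leq\Vert x\Vert$ to absorb the $\langle y,y\rangle$ term, and draw strictness from $y\neq x$, so the identity $2\langle x,x-y\rangle=\Vert x\Vert^{2}-\Vert y\Vert^{2}+\Vert x-y\Vert^{2}$ is just the paper's chain of inequalities packaged as one equation. No gaps; the Cauchy--Schwarz alternative you sketch is also valid but unnecessary.
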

\begin{proof}
We have the strict inequality $0<\left\Vert x-y\right\Vert^{2}$
since $y\neq x$. 
Then 
\[
\begin{array}{rcl}
0 & < & \left\Vert x-y\right\Vert^{2}\\
 & = & \left\langle x,x\right\rangle -2\left\langle x,y\right\rangle +\left\langle y,y\right\rangle \\
 & \leq & 2\left\langle x,x\right\rangle -2\left\langle x,y\right\rangle ,
\end{array}
\]
where we used $\Vert y\Vert \leq \Vert x\Vert$ in the last
step. 
Rearranging the expression yields 
\[
\begin{array}{rcl}
0 & < & \left\langle x,x\right\rangle -\left\langle x,y\right\rangle \\
 & = & \left\langle x,x-y\right\rangle ,
\end{array}
\]
which is the desired inequality.\end{proof}

\end{document}